\DeclareMathOperator*{\argmin}{arg\,min}
\Crefname{equation}{Eq.}{Eqs.}
\Crefname{figure}{Fig.}{Figs.}
\Crefname{tabular}{Tab.}{Tabs.}
\Crefname{section}{Sec.}{Secs.}
\Crefname{definition}{Def.}{Defs.}
\newtheorem{theorem}{Theorem}
\newtheorem{corollary}{Corollary}
\newtheorem{definition}{Definition}
\title{Symbolic Regression is NP-hard}
\author{%
  Marco Virgolin \\
  Centrum Wiskunde \& Informatica\\
  Amsterdam, the Netherlands \\
  \texttt{marco.virgolin@cwi.nl} \\
  \And
  Solon P.\ Pissis \\
  Centrum Wiskunde \& Informatica\\
  Amsterdam, the Netherlands \\
  \texttt{solon.pissis@cwi.nl} \\
}
\begin{document}

\maketitle

\begin{abstract}
  Symbolic regression (SR) is the task of learning a model of data in the form of a mathematical expression.
  By their nature, SR models have the potential to be accurate and human-interpretable at the same time.
  Unfortunately, finding such models, i.e., performing SR,
  appears to be a computationally intensive task.
  Historically, SR has been tackled with heuristics such as greedy or genetic algorithms
  and, while some works have hinted at the possible hardness of SR, no proof has yet been given that SR is, in fact, NP-hard.
  This begs the question: Is there an exact polynomial-time algorithm to compute SR models?
  We provide evidence suggesting that the answer is probably negative by showing that SR is NP-hard.
\end{abstract}

\section{Introduction}

Symbolic regression (SR) is a sub-field of machine learning concerned with discovering a model of the given data in the form of a mathematical expression (or equation)~\citep{koza1994genetic,schmidt2009distilling}.
For example, consider having measurements of planet masses $m_1$ and $m_2$, the distance $r$ between them, and the respective gravitational force $F$.
Then, an SR algorithm would ideally re-discover the well-known expression (or an equivalent formulation thereof) $F=G \times \frac{m_1 m_2}{r^2}$, with $G = 6.6743 \times 10^{-11}$, by opportunely combining the mathematical operations (here, of multiplication and division) with the variables and constant at play.

The appeal of learning models as mathematical expressions goes beyond obtaining predictive power alone, as is commonplace in machine learning.
In fact, SR models are particularly well suited for human interpretability and in-depth analysis~\citep{otte2013safe,virgolin2021model,la2021contemporary}.
This aspect enables a safe and responsible
use of machine learning models for high-stakes societal applications,
as requested in the AI acts by the European Union and the United States~\citep{euact,usact,jobin2019global}.
Moreover, it enables scientists to gain deeper knowledge about the phenomena that underlie the data.
Consequently, SR enjoys wide applicability: SR has successfully been applied to 
astrophysics~\citep{lemos2022rediscovering}, chemistry~\citep{hernandez2019fast}, control~\citep{derner2020constructing}, economics~\citep{verstyuk2022machine}, mechanical engineering~\citep{kronberger2018predicting}, medicine~\citep{virgolin2020machine}, space exploration~\citep{martens2022symbolic}, and more~\citep{matsubara2022rethinking}.

As we will describe in \Cref{sec:related}, many different algorithms have been proposed to address SR, ranging from genetic algorithms to deep learning ones. 
Existing algorithms either lack optimality guarantees or heavily restrict the space of SR models to consider.
In fact, there is a wide belief in the community that SR is an NP-hard problem\footnote{\cite{lu2016using} state that SR is NP-hard but provide no reference nor proof.}~\citep{lu2016using,petersen2019deep,udrescu2020ai,li2022console}.
However, to the best of our knowledge, this belief had yet to be solidified in the form of a proof prior to the advent of this paper.
Indeed, we prove that there exist instances of the SR problem for which one cannot discover the best-possible mathematical expression in polynomial time. 
Id est, SR is an NP-hard problem.

\section{Background: Existing SR algorithms}
\label{sec:related}
The introduction of SR is generally attributed to John R.~Koza (e.g., \cite{zelinka2005analytic} make this claim); however, the problem of finding a mathematical expression or equation that explains empirical measurements was already considered in earlier works~\citep{gerwin1974information,langley1981data,falkenhainer1986integrating}.
Such works build mathematical expressions by iterative application of multiple heuristic tests on the data.

\begin{wrapfigure}{r}{0.3\textwidth}
    \vspace{-.3cm}
    \centering
    \includegraphics[width=0.9\linewidth]{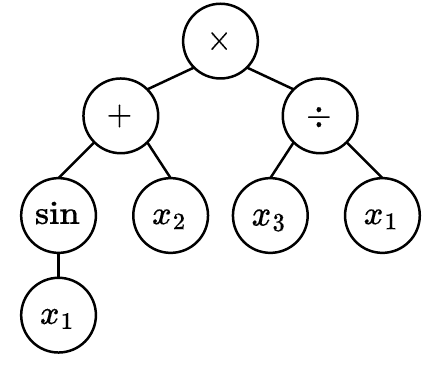}
    \caption{Example of a tree that encodes $f(\mathbf{x})=\left( \sin(x_1)+x_2 \right) \times x_3  / x_1$.}
    \label{fig:graph-sr}
    \vspace{-.25cm}
\end{wrapfigure}
Koza is best known for his pioneering work on genetic programming (GP), i.e., the form of evolutionary computation where candidate solutions are variable-sized and represent programs~\citep{koza1989hierarchical,koza1990genetic,koza1994genetic}.
Early forms of GP where proposed by~\cite{cramer1985representation,hicklin1986application}.
Koza showed that GP can be used to discover SR models by encoding mathematical expressions as computational trees (see \Cref{fig:graph-sr}).
In such trees, internal nodes represent functions (e.g., $+$, $-$, $\times$, etc.) that are drawn from a pre-decided set of possibilities, and leaf nodes represent variables or constants (e.g., $x_1$, $x_2$, \dots, $-1$, $\pi$, etc.).
GP evolves a population of trees by initially sampling random trees, and then conducts the following steps: (1) stochastic replacement and recombination of their sub-trees; (2) evaluation of the fitness by executing the trees and assessing their output; and (3) stochastic survival of the fittest. 

Recently, \cite{la2021contemporary} proposed \emph{SRBench}, a benchmarking platform for SR that includes more than $20$ algorithms have been applied on more than 250 data sets.
SRBench shows that several state-of-the-art algorithms for SR are GP-based.
Among these, at the time of writing, \emph{Operon} by~\cite{burlacu2020operon} was found to perform best in terms of discovering accurate SR models; and \emph{GP-GOMEA} by~\cite{virgolin2021improving} was found to perform best in terms of discovering decently-accurate and relatively-simple SR models (i.e., shorter mathematical expressions).
Other forms of GP, such as \emph{strongly-typed} GP~\citep{montana1995strongly}, \emph{grammar-guided} GP~\citep{mckay2010grammar}, and \emph{grammatical evolution}~\citep{o2001grammatical}, are often used to tackle \emph{dimensionally-aware} SR, i.e., the search of mathematical expressions with constraints to achieve meaningful combinations of units of measurement.

SR has been addressed with many other types of algorithms than genetic ones, oftentimes in order to obtain a deterministic behavior.
\cite{worm2013prioritized} and \cite{kammerer2020symbolic} proposed enumeration algorithms which make SR tractable by restricting the space of possible models to consider and including dynamic programming and pruning strategies.
\cite{cozad2014data,cozad2018global} showed how SR can be addressed with mixed integer nonlinear programming.
\cite{mcconaghy2011ffx} proposed \emph{FFX}, which generates a linear combination of many functions that are linearly-independent from each other, and then fits its coefficients with the \emph{elastic net}~\citep{zou2005regularization} to promote sparsity.
\cite{olivetti2018} and \cite{rivero2022dome} propose greedy algorithms that start from small mathematical expressions and iteratively expand them, by replacing existing components with larger ones from a set of possibilities.

Lastly, recent years have seen the proposal of deep learning-based algorithms for SR.
\cite{petersen2020deep} cast the SR problem as a reinforcement learning one and train a recurrent neural network to generate accurate SR models.
\cite{udrescu2020ai} leverage neural networks in order to test for symmetries and invariances in the data that are then used to prune the space of possible SR models.
An end-to-end approach is taken by~\cite{kamienny2022end} and~\cite{vastl2022symformer}, who train deep neural transformers to produce SR models directly from the data.
\cite{li2022console} seek SR models by proposing a convexified formulation of deep reinforcement learning.

In summary, existing SR algorithms are either heuristics, which do not guarantee optimality (e.g., genetic, greedy, or deep learning-based algorithms), or they are exact algorithms that achieve optimality but only over a small subset of all possible SR models, to limit the runtime (e.g., dynamic programming and mixed-integer nonlinear programming algorithms).
This strongly hints to the fact that SR is NP-hard.
As mentioned earlier, no proof has yet been given.

\section{Preliminaries}
\label{sec:preliminaries}

We will hereon refer to SR models as \emph{functions} when appropriate, as this is their fundamental nature. 
To begin, let us recall the concept of function composition, which is central to SR.
\begin{definition}
\emph{Function composition}

Given two functions $f : \mathbb{A} \rightarrow \mathbb{B}$ and $g : \mathbb{B} \rightarrow \mathbb{C}$, function composition, which we denote by $g \circ f$, is the operation that produces a third function $h : \mathbb{A} \rightarrow \mathbb{C}$, such that $h(x)=g(f(x))$.
\end{definition}

Thanks to function composition, we can now define the concept of \emph{search space} of an SR problem.

\begin{definition}
\label{def:searchspace}
\emph{Search space of SR}

Let $\mathcal{P}$ be a set of functions and variables.
The search space of SR is the function space $\mathcal{F}$ that contains all functions that can be formed by composition of the elements of $\mathcal{P}$ and their compositions.
\end{definition}

To better understand what \Cref{def:searchspace} states, consider that $\mathcal{P}$ can be set to contain a mix of functions that perform basic algebraic operations such as addition, subtraction, multiplication, and division; transcendental functions such as $\sin$, $\cos$, $\log$, $\exp$; constant functions (or simply \emph{constants}), such as $c_{42}(x)=42$ and $c_\pi(x)=\pi$ for any $x$; and variables of interest for the problem at hand, such as $x_1, x_2, x_3$.
$\mathcal{P}$ is typically referred to as the \emph{primitive set}, and its elements as \emph{primitives}~\citep{poli2008field}.
Once $\mathcal{P}$ has been decided, $\mathcal{F}$ is determined.
For example, choosing $\mathcal{P} = \{+(\cdot,\cdot), -(\cdot, \cdot), \times(\cdot,\cdot), x_1, x_2, -1, +1\}$ means that $\mathcal{F}$ will contain a subset of all possible polynomials of arbitrary degree in $x_1$ and $x_2$. 
In particular, $\mathcal{F}$ is a subset because only some coefficients can be expressed, by composing constants with addition, subtraction, and multiplication.

Let us clarify a point regarding constants in particular.
Normally, one would include constants considered to be relevant to the instance of SR at hand.
For example, if the unknown phenomenon for which an SR model is sought is suspected to have sinusoidal components, it may be advisable to include multiples of $\pi$ in $\mathcal{P}$.
Moreover, $\mathcal{P}$ can be set to contain special elements that represent probability distributions from which constants can be sampled (see the concept of \emph{ephemeral random constant} described by~\cite{koza1994genetic,poli2008field}).
We denote one such element by $\mathfrak{R}$ and, e.g., $\mathfrak{R}$ can be chosen to represent the uniform distribution between two numbers, or the normal distribution with a certain mean and variance.
When an SR algorithm picks $\mathfrak{R}$ from $\mathcal{P}$ to compose an SR model, a constant is sampled from the distribution identified by $\mathfrak{R}$.
Here (more specifically, in \Cref{th:corollary-const}) we will generously assume that any constant can be sampled directly from $\mathfrak{R}$, and therefore that there is no need for a real-valued optimizer to be part of the SR algorithm.
For example, having $\mathcal{P} = \{+(\cdot,\cdot), -(\cdot, \cdot), \times(\cdot,\cdot), x_1, x_2, \mathfrak{R}\}$ will mean that $\mathcal{F}$ contains \emph{all} polynomials of arbitrary degree in $x_1$ and $x_2$.

We can now proceed by providing a definition of the SR problem.
While this definition can be extended to other domains, we focus on handling real-valued numbers as the majority of the works takes place in this domain, and subsets thereof.
\begin{definition}
\label{def:sr}
\emph{Symbolic Regression (SR) problem}

Given a set $\mathcal{P}$ of functions and variables,
a metric $\mathcal{L} : \mathbb{R}^n \times \mathbb{R}^n \rightarrow \mathbb{R}$, 
vectors $\mathbf{x}_i = \left( x_{1,i}, \dots, x_{d,i} \right) \in \mathbb{R}^d$
and scalars $y_i \in \mathbb{R}$, for $i=1, \dots, n$,
the SR problem asks for finding a function $f^\star$ such that:


\begin{equation}
\label{eq:sr}
    f^\star = \argmin_{f \in \mathcal{F}} \mathcal{L} \left( \mathbf{y}, f(\mathbf{x}) \right),
\end{equation}

where $\mathcal{F}$ is the search space that is defined by $\mathcal{P}$.
\end{definition}

We provide some remarks concerning the proposed definition of the SR problem.
Firstly, let us map the objects provided in the definition to terms familiar to a machine learning audience. 
The pair $(\mathbf{x}_i, y_i)$ is normally what is referred to as \emph{observation}, \emph{data point}, \emph{example}, or \emph{sample}, where $x_{j,i}$ is the value of the $j$th \emph{feature} or \emph{variable} for the $i$th observation, and $y_i$ is the value of the \emph{label} or \emph{target variable} for the same observation.
The set that contains the observations upon which $\mathcal{L}$ is computed, i.e., $\mathcal{D}=\{ (\mathbf{x}_i, y_i) \}^n_{i=1}$, is called \emph{training set}.
Moreover, the metric $\mathcal{L}$ is called \emph{loss function}.

Normally, we actually desire $f$ to \emph{generalize} to \emph{new} (or also called \emph{unseen}) observations, i.e., observations which are similar to those in $\mathcal{D}$ but not exactly the same (they come from the same underlying and unknown distribution).
In other words, it is not sufficient that $f^\star$ is a best-possible function with respect to the training set, as the loss should remain minimal also for new observations that are not available to us.
Still, considering a ``pure optimization'' formulation, as given in \Cref{eq:sr}, can be considered to be a pre-requisite for being able to machine-learn accurate models from the data; in fact,  much literature that concerns the generation of provably-optimal models provides proofs with respect to the training set alone (see, e.g., results for decision trees~\citep{hu2019optimal}).
In a similar fashion, here we will consider the case of minimizing the loss with respect to the training set $\mathcal{D}$ and show that this is already problematic for any SR algorithm.

As loss function, we consider $\mathcal{L}$ to be a metric (i.e., distance) which operates between the output $f(\mathbf{x}_i)$ and the label $y_i$ across $i=1,\dots,n$.
Commonly-used loss functions such as the \emph{mean absolute error}, \emph{mean squared error}, and \emph{root mean squared error} fit this definition.
However, certain works include regularization terms in the loss function, such as $\lambda \times C(f)$, where $\lambda \in \mathbb{R}$ controls the regularization strength and $C : \mathcal{F} \rightarrow \mathbb{R}$ is a function of the complexity of $f$.
Typical goals of such regularization terms are improving generalization (by limiting effects akin to Runge's phenomenon~\citep{fornberg2007runge}) and improving the interpretability of $f$.
For the latter, implementations of $C$ range from weighed counting of the number of primitives that constitute $f$~\citep{ekart2001selection,hein2018interpretable}, to machine learning models trained from human feedback to predict $f$'s interpretability~\citep{virgolin2020learning,virgolin2021model}.
Here, for simplicity, we focus on $\mathcal{L}$ being a plain metric as stated in \Cref{def:sr} or,  equivalently put, we consider $\lambda=0$.

Lastly on \Cref{def:sr}, we consider only cases in which $f$ is \emph{not} a recursive function, which to the best of our knowledge is the case for the majority of the literature on SR. 
Recursive function discovery is an interesting topic in general (see, e.g.,~\cite{d2022deep}), but it is not interesting here because recursive functions can take exponential time to compute (consider, e.g., Fibonacci's sequence). 
Therefore, it is obvious that the SR problem cannot be solved in polynomial time if certain recursive functions can be considered.
Here, we will assume that computing $f(\mathbf{x})$ and $\mathcal{L}(\mathbf{y},f(\mathbf{x}))$ can be done in polynomial time.
Regarding $\mathcal{L}$, our assumption is met for all commonly-used metrics by which $\mathcal{L}$ is implemented (mean absolute error, mean squared error, variants thereof with margins, etc.). 
In fact, computing losses of such form takes $O(n)$ time, i.e., the runtime is linear in the number of observations.
Regarding the computation of $f(\mathbf{x})$, $f$ itself can be implemented as a directed acyclic graph, where nodes represent the functions and variables from $\mathcal{P}$, and edges represent compositions.
To compute $f(\mathbf{x})$, it suffices to visit each node of the graph for each observation, thus requiring $O(\ell \times n)$, where $\ell$ is the number of primitives in $f$. 
\Cref{fig:graph-sr} shows an example of such a graph, especially in the form of a \emph{tree}, which is perhaps the most common way of encoding mathematical expressions in SR (see, e.g., the SR algorithms benchmarked by~\cite{la2021contemporary}).

We conclude this section with the following important definition.

\begin{definition}
\label{def:sr-dec}
\emph{Decision version of the SR problem (SR-Dec)}

Given an SR instance and an $\epsilon \in \mathbb{R}^+_0$, SR-Dec outputs \textsf{YES} if and only if:

\begin{equation}
\label{eq:sr-dec}
    \exists f \in \mathcal{F} : \mathcal{L} \left( \mathbf{y}, f(\mathbf{x}) \right) \leq \epsilon.
\end{equation}

\end{definition}

Essentially, \Cref{def:sr-dec} is the problem of deciding whether there exists a function $f$ in the search space such that its loss is smaller than a chosen threshold $\epsilon$.

\section{The result}

We proceed directly by providing the main result of this paper.

\begin{theorem}
\label{th:sr-nphard}
The SR problem is NP-hard.
\end{theorem}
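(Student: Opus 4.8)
The plan is to prove NP-hardness of the optimization problem in \Cref{def:sr} by proving NP-hardness of its decision version SR-Dec (\Cref{def:sr-dec}) and then invoking the standard reduction between the two: any polynomial-time algorithm returning an optimizer $f^\star$ of \Cref{eq:sr} would also decide SR-Dec, since $\mathcal{L}(\mathbf{y}, f^\star(\mathbf{x}))$ can be computed in polynomial time (by the assumptions stated in \Cref{sec:preliminaries}) and compared against $\epsilon$. Hence it suffices to exhibit a polynomial-time many-one reduction from a known NP-complete problem $\Pi$ to SR-Dec. I would reduce from a canonical combinatorial problem such as $3$-SAT (a CSP-style reduction), constructing in polynomial time, from an instance of $\Pi$, a primitive set $\mathcal{P}$, a training set $\mathcal{D} = \{(\mathbf{x}_i, y_i)\}_{i=1}^n$, a loss $\mathcal{L}$ (e.g., mean squared error), and a threshold $\epsilon$, so that the answer to SR-Dec is \textsf{YES} if and only if $\Pi$ is a \textsf{YES}-instance.

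The core of the reduction is a gadget that forces the discrete \emph{syntactic} structure of a candidate $f \in \mathcal{F}$ to encode a feasible solution of $\Pi$; this is the natural source of hardness, since the free choice in SR is not a continuous parameter but which primitives of $\mathcal{P}$ are composed and how. Concretely, I would restrict $\mathcal{P}$ to a small set of variables and arithmetic primitives (for instance, variables $x_1, \ldots, x_d$ together with $+$, $\times$, and a few fixed constants) chosen so that the only way a composed function can simultaneously fit \emph{all} data points is to ``select,'' through its structure, a consistent assignment. Each observation $(\mathbf{x}_i, y_i)$ then plays the role of one constraint of $\Pi$: the feature values $x_{j,i}$ are set so that $f(\mathbf{x}_i) = y_i$ (with $\epsilon = 0$, i.e., exact fit) holds exactly when the corresponding constraint of $\Pi$ is satisfied, while auxiliary observations enforce global consistency (e.g., that a Boolean variable and its negation are not both selected). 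The two directions are argued separately: from a solution of $\Pi$ one builds an explicit $f$ achieving zero loss (completeness), and conversely any $f$ with loss $\le \epsilon$ is decoded into a feasible solution of $\Pi$ (soundness).

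The main obstacle is \emph{soundness}, i.e., ruling out ``shortcut'' functions. Because $\mathcal{F}$ is the full closure of $\mathcal{P}$ under composition, it is enormous, and one must ensure it contains no function that fits the data with loss $\le \epsilon$ yet corresponds to no valid solution of $\Pi$. In particular, one must avoid the degenerate regime in which the primitive set is so expressive that it can interpolate \emph{any} finite dataset (e.g., via a high-degree polynomial through distinct abscissae), which would make SR-Dec trivially \textsf{YES} and break the reduction. I would address this by keeping $\mathcal{P}$ deliberately impoverished and by choosing the data values (and, if needed, the dimensions $n$ and $d$) so that exact fitting is provably equivalent to the combinatorial constraints, then verifying that $|\mathcal{P}|$, $n$, $d$, and the bit-sizes of all numbers are polynomial in the size of the $\Pi$-instance, so the reduction runs in polynomial time. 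Finally, the remark on $\mathfrak{R}$ suggests strengthening the result (cf.\ \Cref{th:corollary-const}) by checking that the same construction remains hard even when arbitrary real constants may be sampled, which only enlarges $\mathcal{F}$ and therefore requires re-examining soundness under this richer space.
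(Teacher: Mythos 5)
Your outer skeleton matches the paper exactly: show hardness of SR-Dec by a polynomial-time reduction from a known NP-complete problem, then lift to the optimization problem via the standard argument that a polynomial-time optimizer $f^\star$ plus a polynomial-time evaluation of $\mathcal{L}(\mathbf{y}, f^\star(\mathbf{x}))$ decides SR-Dec. That part is sound. The gap is that the core of the proof --- the reduction itself --- is never constructed. You name a source problem ($3$-SAT), describe what a gadget \emph{should} do, and correctly identify soundness over the composition-closed space $\mathcal{F}$ as the main obstacle, but you resolve it only with intentions (``keeping $\mathcal{P}$ deliberately impoverished,'' choosing data values ``so that exact fitting is provably equivalent to the combinatorial constraints''). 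As written, no specific $\mathcal{P}$, $\mathcal{D}$, or decoding map is given, and no completeness/soundness argument is carried out. This matters because the obstacle you flag is genuinely severe for a $3$-SAT gadget: with $\times$ and even a few fixed constants in $\mathcal{P}$, the closure under composition contains multivariate polynomials of unbounded degree, and characterizing exactly which functions in that closure interpolate your constraint-encoding observations --- so as to rule out every shortcut --- is the hard engineering problem, not a routine verification. A plan that defers precisely this step is a plan, not a proof.

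The paper avoids the gadget problem structurally rather than solving it. It reduces from the unbounded subset sum problem (USSP-Dec) and takes $\mathcal{P} = \{+, x_1, \dots, x_d\}$ with a \emph{single} observation $x_j = w_j$, $y = t$, and $\epsilon = 0$. With $+$ as the only operator, $\mathcal{F}$ is exactly $\{\sum_{j=1}^d x_j m_j : m_j \in \mathbb{N}_0\}$, so there is a transparent correspondence between functions in the search space and multiplicity vectors $\mathbf{m}$, and SR-Dec on this instance \emph{is} USSP-Dec (\Cref{eq:sr-dec} rewrites to \Cref{eq:ussp} by direct substitution); completeness and soundness collapse into a one-line chain of equivalences, with no shortcut functions possible. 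The lesson relative to your attempt: the freedom in choosing $\mathcal{P}$ lets you pick a source problem whose solution space coincides with $\mathcal{F}$ itself, rather than fighting to embed a CSP into an expressive $\mathcal{F}$ and then prune it. To repair your proposal you would either need to carry out the $3$-SAT gadget with a full soundness proof (hard, and not obviously achievable with the primitives you suggest), or switch, as the paper does, to a number-theoretic problem like subset sum where the additive structure of the search space does the work for you. Your closing observation about $\mathfrak{R}$ is on target and corresponds to the paper's \Cref{th:corollary-const}, which handles the sampled constant by adding a second, all-zeros observation that forces $c = 0$.
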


\begin{proof}
Let us begin by stating that SR-Dec is in NP.
Recall that the computations of $f(\mathbf{x})$ and $\mathcal{L}(\mathbf{y},f(\mathbf{x}))$ take polynomial time (see \Cref{sec:preliminaries}). 
Of course, the check $\leq \epsilon$ takes $O(1)$ time. 
Thus, if $f$ is guessed by an oracle, then we can provide an answer to SR-Dec in polynomial time.

We proceed by considering the unbounded subset sum problem (USSP).
USSP is a similar problem to the unbounded knapsack problem, where a same item can be put in the knapsack an arbitrary number of times, and the weight of an item corresponds exactly to the profit gained by including that item in the knapsack.
The decision version of USSP, USSP-Dec, is defined as follows.
Given $j=1, \dots, k$ ($k$ items), $w_j \in \mathbb{N}$ (weight of that item), and $t \in \mathbb{N}$ (the target), USSP-Dec asks:
\begin{equation}
\label{eq:ussp}
\exists \mathbf{m} : \sum_{j=1}^k w_j m_j = t?  
\end{equation}
where $m_j \in \mathbb{N}_0$ (multiplicity with which an item is picked).
USSP-Dec is known to be NP-complete~\citep{kellerer2004introduction}.

To prove that SR-Dec is NP-complete, we show that any instance of USSP-Dec can be reduced to some instance of SR-Dec in polynomial time. 
To this end, we will restrict SR-Dec as follows:
(1) We pick the set of primitives $\mathcal{P}$ to be $\mathcal{P}=\{+, x_1, \dots, x_d \}$;
(2) We set $\epsilon = 0$.
In other words, we set the search space $\mathcal{F}$ to contain only linear sums of the features in the data set $\mathcal{D}$, i.e., functions of the form $f(\mathbf{x}) = \sum^d_{j=1} x_j m_j$ with $m_j \in \mathbb{N}_0$.
SR-Dec will output \textsf{YES} if and only if there exists such a function in $\mathcal{F}$ that achieves zero loss, i.e., it perfectly interpolates all observations in $\mathcal{D}$.

Next, we craft $\mathcal{D}$ to have a single observation ($n=1$) and $k$ features ($d=k$).
For the only observation in $\mathcal{D}$ (dropping the index for the observation number, since there is only one), we set $x_1 = w_1, x_2 = w_2, \dots, x_k = w_k$, and $y = t$.

Then, the following holds:

\begin{align}
\label{eq:derivation}
    \emph{(\Cref{eq:sr-dec}) \ \ } & \exists f \in \mathcal{F} : \mathcal{L} \left( y, f(\mathbf{x}) \right) \leq \epsilon? \\
    \emph{(Choosing $\epsilon = 0$) \ \ } & \exists f \in \mathcal{F} : \mathcal{L} \left( y, f(\mathbf{x}) \right) \leq 0? \\
    \emph{($\mathcal{L} \left( y, f(\mathbf{x}) \right)=0 \iff f(\mathbf{x})=y$) \ \ } & 
    \exists f \in \mathcal{F} : f(\mathbf{x}) = y? \\
    \emph{(Equivalence $y=t$ due to $\mathcal{D}$) \ \ } & 
    \exists f \in \mathcal{F} : f(\mathbf{x}) = t? \\
    \emph{(Expanding $\mathcal{F}$ based on choice of $\mathcal{P}$) \ \ } &
    \exists f \in \left\{ \sum_{j=1}^d x_j m_j : m_j \in \mathbb{N}_0 \right\} : f(\mathbf{x}) = t? \\
    \emph{(Equivalence $x_j=w_j, d=k$ due to $\mathcal{D}$) \ \ } &
    \exists f \in \left\{ \sum_{j=1}^k w_j m_j : m_j \in \mathbb{N}_0 \right\} : f(\mathbf{x}) = t? \\
    \emph{(Re-formulating in terms of $\mathbf{m}$) \ \ } &
    \exists \mathbf{m} : \sum_{j=1}^k w_j m_j = t?   \label{eq:last-step} 
\end{align}
In other words, there exist some instances of SR-Dec that can be re-formulated as USSP-Dec (cfr.~\Cref{eq:ussp,eq:last-step}).
Now, since assembling $\mathcal{P}$ as stated above takes linear time in $k$, picking $\epsilon=0$ takes $\mathcal{O}(1)$ time, and constructing $\mathcal{D}$ as stated above takes linear time in $k$, then any instance of USSB-Dec can be reduced to some instance of SR-Dec in polynomial time: SR-Dec is NP-complete.

We conclude the proof with a \emph{reductio ab absurdum}.
Let us assume that there exists an algorithm to compute an optimal $f^\star$ for the SR problem (\Cref{def:sr}) in polynomial time.
An optimal $f^\star$ is the one for which the loss is minimal, which means that using $f^\star$ in \Cref{eq:sr-dec} allows us to immediately answer SR-Dec.
Since verifying that $\mathcal{L}(\mathbf{y},f^\star(\mathbf{x})) \leq \epsilon$ takes polynomial time, we conclude that if the SR problem can be solved in polynomial time, then we can also solve SR-Dec in polynomial time.
Therefore, the SR problem is NP-hard.

\end{proof}

We remark that, in the proof of \Cref{th:sr-nphard}, we construct $\mathcal{P}$ so as not to contain $\mathfrak{R}$ (nor any constant).
Some readers might disagree with this quite broad definition of SR.
In fact, some SR algorithms heavily rely on the presence of constants as well as on their optimization (e.g., \emph{FFX} by~\cite{mcconaghy2011ffx} and \emph{FEAT} by~\cite{la2018learning}).
Not allowing for arbitrary constants to be present in the functions of the search space might be seen as a violation of the very definition of SR.
In other words, some might think that $\mathcal{P}$ \emph{must} contain $\mathfrak{R}$.
We next show that SR remains NP-hard in this special case.

\begin{corollary}
\label{th:corollary-const}
The SR problem is NP-hard even when $\mathcal{P}$ must include $\mathfrak{R}$.
\end{corollary}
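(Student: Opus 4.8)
The plan is to reuse the reduction from USSP-Dec developed in the proof of \Cref{th:sr-nphard}, but to \emph{neutralize} the extra expressive power that $\mathfrak{R}$ grants. First I would characterize the enlarged search space: with $\mathcal{P} = \{+, x_1, \dots, x_d, \mathfrak{R}\}$ the only function primitive is addition, so every $f \in \mathcal{F}$ is a sum of leaves, each leaf being a variable or a constant sampled from $\mathfrak{R}$. Hence $\mathcal{F} = \{\sum_{j=1}^d m_j x_j + c : m_j \in \mathbb{N}_0, c \in \mathbb{R}\}$, i.e., exactly the functions of the original reduction augmented by a single arbitrary additive constant $c$.

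Second, I would point out precisely why this breaks the previous construction. With the single observation $(\mathbf{w}, t)$ used in \Cref{th:sr-nphard}, the constant function $f(\mathbf{x}) = t$ (obtained by sampling $c = t$ from $\mathfrak{R}$ and taking all $m_j = 0$) achieves zero loss regardless of whether the underlying USSP-Dec instance is a \textsf{YES}-instance. The constant $c$ must therefore be pinned down. The main idea, and the step I expect to be the crux, is to add a single \emph{anchor} observation that forces $c = 0$ at zero loss. Concretely, I would craft $\mathcal{D}$ with $n = 2$ observations: the observation $(x_1, \dots, x_k) = (w_1, \dots, w_k)$ with $y = t$ as before, and a second observation with all features equal to $0$ and $y = 0$.

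Third, I would carry the equivalence through. Because $\mathcal{L}$ is a metric, $\mathcal{L}(\mathbf{y}, f(\mathbf{x})) = 0$ holds if and only if $f$ interpolates every observation. Evaluating on the anchor gives $f(\mathbf{0}) = c = 0$, forcing $c = 0$; evaluating on the first observation then yields $\sum_{j=1}^k m_j w_j = t$. Thus zero loss over $\mathcal{D}$ is equivalent to the existence of $\mathbf{m}$ with $\sum_{j=1}^k w_j m_j = t$, which is exactly USSP-Dec (\Cref{eq:ussp}). The converse direction is immediate, since $c = 0$ is itself attainable in $\mathcal{F}$, so any USSP solution yields an interpolating $f$ via the same $m_j$ and $c = 0$.

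Finally, I would note that adding one observation keeps the reduction polynomial (still linear in $k$), so SR-Dec remains NP-complete even under the constraint $\mathfrak{R} \in \mathcal{P}$, and the same \emph{reductio ab absurdum} as in \Cref{th:sr-nphard} upgrades this to NP-hardness of the optimization problem. The only genuine obstacle is the conceptual one at the start, namely recognizing that $\mathfrak{R}$ trivializes a single-point fit and designing the anchor gadget to cancel the stray constant; every subsequent step mirrors the proof of \Cref{th:sr-nphard}.
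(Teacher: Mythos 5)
Your proposal is correct and matches the paper's own proof essentially step for step: the paper likewise adds a second observation with all features and label equal to zero, which forces the sampled constant $c$ to vanish at zero loss and reduces the problem back to the construction of \Cref{th:sr-nphard}. Your additional remarks (explicitly noting why the single-observation reduction fails with $\mathfrak{R}$, and that the reduction stays linear in $k$) are sound elaborations of the same argument rather than a different route.
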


\begin{proof}
We follow a similar construction of the proof of \Cref{th:sr-nphard}.
Namely, the only difference from before is in the way we pick $\mathcal{P}$ and construct $\mathcal{D}$.
This time, we set $\mathcal{P}$ to additionally contain $\mathfrak{R}$, i.e., $\mathcal{P}=\{+,x_1,x_2,\dots,x_d,\mathfrak{R}\}$. 
This means that the function space $\mathcal{F}$ now contains functions of the form $f(\mathbf{x}) = c + \sum^d_{j=1} x_j m_j$ with $m_j \in \mathbb{N}_0$ and $c \in \mathbb{R}$ (sampled from $\mathfrak{R}$).
As to $\mathcal{D}$, we will now include two observations instead of a single one.
The first observation is set as before, i.e.,
$x_{1,1} = w_1, x_{2,1} = w_2, \dots, x_{k,1} = w_k$ ($d=k$) and $y_1 = t$.
As to the second observation, we set
$x_{1,2} = 0, x_{2,2} = 0, \dots, x_{k,2} = 0$ and $y_2=0$, i.e., the value of all features and of the label are set to zero. Now, $\mathcal{L}(\mathbf{y}, f(\mathbf{x}))=0 \iff f(\mathbf{x}_i) = y_i$ for \emph{both} $i=1,2$.
For $f(\mathbf{x}_2) = y_2 = 0$, since any $f$ has the form 
$f(\mathbf{x}) = c + \sum^d_{j=1} x_j m_j$ and $x_{j,2} = 0$, for all $j$, then $f(\mathbf{x}_2) = c + \sum^d_{j=1} 0 \times m_j = c$.
But $f(\mathbf{x}_2) = y_2 = 0 \iff c = 0$.
In other words, we know that every $f$ for which $c \neq 0$ is one for which SR-Dec outputs \textsf{NO}. 
Therefore, by construction, we can immediately ignore all of those functions, and consider only the subset of $\mathcal{F}$ that contains functions of the form $f(\mathbf{x}) = 0 + \sum^d_{j=1} x_j m_j = \sum^d_{j=1} x_j m_j$.
For every one of such functions, the loss for the second observation is by construction zero and we can therefore ignore it.
Consequently, we are now back to the same setting considered in \Cref{th:sr-nphard}, which concludes the proof.

\end{proof}

\section{Conclusion}
Our main contribution here was to prove that
symbolic regression (SR), i.e., the problem of discovering an accurate model of data in the form of a mathematical expression, is in fact NP-hard.
In particular, we have provided formal definitions of what SR should entail, and showed how the decision version of the unbounded subset sum problem can be reduced to a decision version of the SR problem. 
Except for the general definition of SR we considered, we have additionally shown that SR remains NP-hard even when the set of primitives must contain distributions from which constants can be sampled.

We hope that this note inspires more works on lower and upper bounds of different SR variants.

\section*{Acknowledgements}
We thank Marc Schoenauer (INRIA, France) for discussions on early works on symbolic regression.

\bibliographystyle{abbrvnat}
\bibliography{main}

\begin{thebibliography}{53}
\providecommand{\natexlab}[1]{#1}
\providecommand{\url}[1]{\texttt{#1}}
\expandafter\ifx\csname urlstyle\endcsname\relax
  \providecommand{\doi}[1]{doi: #1}\else
  \providecommand{\doi}{doi: \begingroup \urlstyle{rm}\Url}\fi

\bibitem[{117th US Congress}(2022)]{usact}
{117th US Congress}.
\newblock Algorithmic accountability act, 2022.
\newblock URL
  \url{https://www.congress.gov/bill/117th-congress/house-bill/6580/}.

\bibitem[Burlacu et~al.(2020)Burlacu, Kronberger, and
  Kommenda]{burlacu2020operon}
B.~Burlacu, G.~Kronberger, and M.~Kommenda.
\newblock Operon {C++} an efficient genetic programming framework for symbolic
  regression.
\newblock In \emph{Proceedings of the 2020 Genetic and Evolutionary Computation
  Conference Companion}, pages 1562--1570, 2020.

\bibitem[Cozad(2014)]{cozad2014data}
A.~Cozad.
\newblock \emph{Data-and theory-driven techniques for surrogate-based
  optimization}.
\newblock PhD thesis, Ph.~D.~thesis, Department of Chemical Engineering,
  Carnegie Mellon~University, 2014.

\bibitem[Cozad and Sahinidis(2018)]{cozad2018global}
A.~Cozad and N.~V. Sahinidis.
\newblock A global {MINLP} approach to symbolic regression.
\newblock \emph{Mathematical Programming}, 170\penalty0 (1):\penalty0 97--119,
  2018.

\bibitem[Cramer(1985)]{cramer1985representation}
N.~L. Cramer.
\newblock A representation for the adaptive generation of simple sequential
  programs.
\newblock In \emph{Proceedings of the First International Conference on Genetic
  Algorithms}, pages 183--187, 1985.

\bibitem[d'Ascoli et~al.(2022)d'Ascoli, Kamienny, Lample, and
  Charton]{d2022deep}
S.~d'Ascoli, P.-A. Kamienny, G.~Lample, and F.~Charton.
\newblock Deep symbolic regression for recurrent sequences.
\newblock \emph{arXiv preprint arXiv:2201.04600}, 2022.

\bibitem[Derner et~al.(2020)Derner, Kubal{\'\i}k, Ancona, and
  Babu{\v{s}}ka]{derner2020constructing}
E.~Derner, J.~Kubal{\'\i}k, N.~Ancona, and R.~Babu{\v{s}}ka.
\newblock Constructing parsimonious analytic models for dynamic systems via
  symbolic regression.
\newblock \emph{Applied Soft Computing}, 94:\penalty0 106432, 2020.

\bibitem[Ekart and Nemeth(2001)]{ekart2001selection}
A.~Ekart and S.~Z. Nemeth.
\newblock Selection based on the pareto nondomination criterion for controlling
  code growth in genetic programming.
\newblock \emph{Genetic Programming and Evolvable Machines}, 2\penalty0
  (1):\penalty0 61--73, 2001.

\bibitem[{European Commission}(2021)]{euact}
{European Commission}.
\newblock Artificial intelligence act, 2021.
\newblock URL \url{https://artificialintelligenceact.eu/}.

\bibitem[Falkenhainer and Michalski(1986)]{falkenhainer1986integrating}
B.~C. Falkenhainer and R.~S. Michalski.
\newblock Integrating quantitative and qualitative discovery: {T}he {ABACUS}
  system.
\newblock \emph{Machine Learning}, 1\penalty0 (4):\penalty0 367--401, 1986.

\bibitem[Fornberg and Zuev(2007)]{fornberg2007runge}
B.~Fornberg and J.~Zuev.
\newblock The runge phenomenon and spatially variable shape parameters in rbf
  interpolation.
\newblock \emph{Computers \& Mathematics with Applications}, 54\penalty0
  (3):\penalty0 379--398, 2007.

\bibitem[Gerwin(1974)]{gerwin1974information}
D.~Gerwin.
\newblock Information processing, data inferences, and scientific
  generalization.
\newblock \emph{Behavioral Science}, 19\penalty0 (5):\penalty0 314--325, 1974.

\bibitem[Hein et~al.(2018)Hein, Udluft, and Runkler]{hein2018interpretable}
D.~Hein, S.~Udluft, and T.~A. Runkler.
\newblock Interpretable policies for reinforcement learning by genetic
  programming.
\newblock \emph{Engineering Applications of Artificial Intelligence},
  76:\penalty0 158--169, 2018.

\bibitem[Hernandez et~al.(2019)Hernandez, Balasubramanian, Yuan, Mason, and
  Mueller]{hernandez2019fast}
A.~Hernandez, A.~Balasubramanian, F.~Yuan, S.~A. Mason, and T.~Mueller.
\newblock Fast, accurate, and transferable many-body interatomic potentials by
  symbolic regression.
\newblock \emph{npj Computational Materials}, 5\penalty0 (1):\penalty0 1--11,
  2019.

\bibitem[Hicklin(1986)]{hicklin1986application}
J.~F. Hicklin.
\newblock \emph{Application of the genetic algorithm to automatic program
  generation}.
\newblock PhD thesis, University of Idaho, 1986.

\bibitem[Hu et~al.(2019)Hu, Rudin, and Seltzer]{hu2019optimal}
X.~Hu, C.~Rudin, and M.~Seltzer.
\newblock Optimal sparse decision trees.
\newblock \emph{Advances in Neural Information Processing Systems}, 32, 2019.

\bibitem[Jobin et~al.(2019)Jobin, Ienca, and Vayena]{jobin2019global}
A.~Jobin, M.~Ienca, and E.~Vayena.
\newblock The global landscape of {AI} ethics guidelines.
\newblock \emph{Nature Machine Intelligence}, 1\penalty0 (9):\penalty0
  389--399, 2019.

\bibitem[Kamienny et~al.(2022)Kamienny, d'Ascoli, Lample, and
  Charton]{kamienny2022end}
P.-A. Kamienny, S.~d'Ascoli, G.~Lample, and F.~Charton.
\newblock End-to-end symbolic regression with transformers.
\newblock \emph{arXiv preprint arXiv:2204.10532}, 2022.

\bibitem[Kammerer et~al.(2020)Kammerer, Kronberger, Burlacu, Winkler, Kommenda,
  and Affenzeller]{kammerer2020symbolic}
L.~Kammerer, G.~Kronberger, B.~Burlacu, S.~M. Winkler, M.~Kommenda, and
  M.~Affenzeller.
\newblock \emph{Symbolic Regression by Exhaustive Search: Reducing the Search
  Space Using Syntactical Constraints and Efficient Semantic Structure
  Deduplication}, pages 79--99.
\newblock Springer International Publishing, 2020.

\bibitem[Kellerer et~al.(2004)Kellerer, Pferschy, and
  Pisinger]{kellerer2004introduction}
H.~Kellerer, U.~Pferschy, and D.~Pisinger.
\newblock Introduction to {NP}-{C}ompleteness of knapsack problems.
\newblock In \emph{Knapsack Problems}, pages 483--493. Springer, 2004.

\bibitem[Koza(1990)]{koza1990genetic}
J.~R. Koza.
\newblock \emph{Genetic programming: A paradigm for genetically breeding
  populations of computer programs to solve problems}, volume~34.
\newblock Stanford University, 1990.

\bibitem[Koza(1994)]{koza1994genetic}
J.~R. Koza.
\newblock Genetic programming as a means for programming computers by natural
  selection.
\newblock \emph{Statistics and Computing}, 4\penalty0 (2):\penalty0 87--112,
  1994.

\bibitem[Koza et~al.(1989)]{koza1989hierarchical}
J.~R. Koza et~al.
\newblock Hierarchical genetic algorithms operating on populations of computer
  programs.
\newblock In \emph{International Joint Conference on Artificial Intelligence},
  volume~89, pages 768--774, 1989.

\bibitem[Kronberger et~al.(2018)Kronberger, Kommenda, Promberger, and
  Nickel]{kronberger2018predicting}
G.~Kronberger, M.~Kommenda, A.~Promberger, and F.~Nickel.
\newblock Predicting friction system performance with symbolic regression and
  genetic programming with factor variables.
\newblock In \emph{Proceedings of the Genetic and Evolutionary Computation
  Conference}, pages 1278--1285, 2018.

\bibitem[La~Cava et~al.(2018)La~Cava, Singh, Taggart, Suri, and
  Moore]{la2018learning}
W.~La~Cava, T.~R. Singh, J.~Taggart, S.~Suri, and J.~H. Moore.
\newblock Learning concise representations for regression by evolving networks
  of trees.
\newblock In \emph{International Conference on Learning Representations}, 2018.

\bibitem[La~Cava et~al.(2021)La~Cava, Orzechowski, Burlacu, de~Franca,
  Virgolin, Jin, Kommenda, and Moore]{la2021contemporary}
W.~La~Cava, P.~Orzechowski, B.~Burlacu, F.~O. de~Franca, M.~Virgolin, Y.~Jin,
  M.~Kommenda, and J.~H. Moore.
\newblock Contemporary symbolic regression methods and their relative
  performance.
\newblock In \emph{Advances in Neural Information Processing Systems ---
  Datasets and Benchmarks Track}, 2021.

\bibitem[Langley(1981)]{langley1981data}
P.~Langley.
\newblock Data-driven discovery of physical laws.
\newblock \emph{Cognitive Science}, 5\penalty0 (1):\penalty0 31--54, 1981.

\bibitem[Lemos et~al.(2022)Lemos, Jeffrey, Cranmer, Ho, and
  Battaglia]{lemos2022rediscovering}
P.~Lemos, N.~Jeffrey, M.~Cranmer, S.~Ho, and P.~Battaglia.
\newblock Rediscovering orbital mechanics with machine learning.
\newblock \emph{arXiv preprint arXiv:2202.02306}, 2022.

\bibitem[Li et~al.(2022)Li, Weng, and Tong]{li2022console}
H.~Li, Y.~Weng, and H.~Tong.
\newblock {CoNSoLe}: {C}onvex neural symbolic learning.
\newblock \emph{arXiv preprint arXiv:2206.00257}, 2022.

\bibitem[Lu et~al.(2016)Lu, Ren, and Wang]{lu2016using}
Q.~Lu, J.~Ren, and Z.~Wang.
\newblock Using genetic programming with prior formula knowledge to solve
  symbolic regression problem.
\newblock \emph{Computational Intelligence and Neuroscience}, 2016.

\bibitem[M{\"a}rtens and Izzo(2022)]{martens2022symbolic}
M.~M{\"a}rtens and D.~Izzo.
\newblock Symbolic regression for space applications: {D}ifferentiable
  cartesian genetic programming powered by multi-objective memetic algorithms.
\newblock \emph{arXiv preprint arXiv:2206.06213}, 2022.

\bibitem[Matsubara et~al.(2022)Matsubara, Chiba, Igarashi, Taniai, and
  Ushiku]{matsubara2022rethinking}
Y.~Matsubara, N.~Chiba, R.~Igarashi, T.~Taniai, and Y.~Ushiku.
\newblock Rethinking symbolic regression datasets and benchmarks for scientific
  discovery.
\newblock \emph{arXiv preprint arXiv:2206.10540}, 2022.

\bibitem[McConaghy(2011)]{mcconaghy2011ffx}
T.~McConaghy.
\newblock {FFX}: {F}ast, scalable, deterministic symbolic regression
  technology.
\newblock In \emph{Genetic Programming Theory and Practice IX}, pages 235--260.
  Springer, 2011.

\bibitem[McKay et~al.(2010)McKay, Hoai, Whigham, Shan, and
  O’neill]{mckay2010grammar}
R.~I. McKay, N.~X. Hoai, P.~A. Whigham, Y.~Shan, and M.~O’neill.
\newblock Grammar-based genetic programming: a survey.
\newblock \emph{Genetic Programming and Evolvable Machines}, 11\penalty0
  (3):\penalty0 365--396, 2010.

\bibitem[Montana(1995)]{montana1995strongly}
D.~J. Montana.
\newblock Strongly typed genetic programming.
\newblock \emph{Evolutionary Computation}, 3\penalty0 (2):\penalty0 199--230,
  1995.

\bibitem[{Olivetti de França}(2018)]{olivetti2018}
F.~{Olivetti de França}.
\newblock A greedy search tree heuristic for symbolic regression.
\newblock \emph{Information Sciences}, 442-443:\penalty0 18--32, 2018.
\newblock ISSN 0020-0255.

\bibitem[O'Neill and Ryan(2001)]{o2001grammatical}
M.~O'Neill and C.~Ryan.
\newblock Grammatical evolution.
\newblock \emph{IEEE Transactions on Evolutionary Computation}, 5\penalty0
  (4):\penalty0 349--358, 2001.

\bibitem[Otte(2013)]{otte2013safe}
C.~Otte.
\newblock Safe and interpretable machine learning: {A} methodological review.
\newblock \emph{Computational Intelligence in Intelligent Data Analysis}, pages
  111--122, 2013.

\bibitem[Petersen et~al.(2019)Petersen, Larma, Mundhenk, Santiago, Kim, and
  Kim]{petersen2019deep}
B.~K. Petersen, M.~L. Larma, T.~N. Mundhenk, C.~P. Santiago, S.~K. Kim, and
  J.~T. Kim.
\newblock Deep symbolic regression: {R}ecovering mathematical expressions from
  data via risk-seeking policy gradients.
\newblock \emph{arXiv preprint arXiv:1912.04871}, 2019.

\bibitem[Petersen et~al.(2020)Petersen, Larma, Mundhenk, Santiago, Kim, and
  Kim]{petersen2020deep}
B.~K. Petersen, M.~L. Larma, T.~N. Mundhenk, C.~P. Santiago, S.~K. Kim, and
  J.~T. Kim.
\newblock Deep symbolic regression: {R}ecovering mathematical expressions from
  data via risk-seeking policy gradients.
\newblock In \emph{International Conference on Learning Representations}, 2020.

\bibitem[Poli et~al.(2008)Poli, Langdon, and McPhee]{poli2008field}
R.~Poli, W.~B. Langdon, and N.~F. McPhee.
\newblock \emph{A Field Guide to Genetic Programming}.
\newblock Lulu Press, 2008.

\bibitem[Rivero et~al.(2022)Rivero, Fernandez-Blanco, and
  Pazos]{rivero2022dome}
D.~Rivero, E.~Fernandez-Blanco, and A.~Pazos.
\newblock {DoME}: {A} deterministic technique for equation development and
  symbolic regression.
\newblock \emph{Expert Systems with Applications}, 198:\penalty0 116712, 2022.

\bibitem[Schmidt and Lipson(2009)]{schmidt2009distilling}
M.~Schmidt and H.~Lipson.
\newblock Distilling free-form natural laws from experimental data.
\newblock \emph{Science}, 324\penalty0 (5923):\penalty0 81--85, 2009.

\bibitem[Udrescu and Tegmark(2020)]{udrescu2020ai}
S.-M. Udrescu and M.~Tegmark.
\newblock {AI} {F}eynman: {A} physics-inspired method for symbolic regression.
\newblock \emph{Science Advances}, 6\penalty0 (16):\penalty0 eaay2631, 2020.

\bibitem[Vastl et~al.(2022)Vastl, Kulh{\'a}nek, Kubal{\'\i}k, Derner, and
  Babu{\v{s}}ka]{vastl2022symformer}
M.~Vastl, J.~Kulh{\'a}nek, J.~Kubal{\'\i}k, E.~Derner, and R.~Babu{\v{s}}ka.
\newblock {SymFormer}: {E}nd-to-end symbolic regression using transformer-based
  architecture.
\newblock \emph{arXiv preprint arXiv:2205.15764}, 2022.

\bibitem[Verstyuk and Douglas(2022)]{verstyuk2022machine}
S.~Verstyuk and M.~R. Douglas.
\newblock Machine learning the gravity equation for international trade.
\newblock \emph{Available at SSRN 4053795}, 2022.

\bibitem[Virgolin et~al.(2020{\natexlab{a}})Virgolin, De~Lorenzo, Medvet, and
  Randone]{virgolin2020learning}
M.~Virgolin, A.~De~Lorenzo, E.~Medvet, and F.~Randone.
\newblock Learning a formula of interpretability to learn interpretable
  formulas.
\newblock In \emph{International Conference on Parallel Problem Solving from
  Nature}, pages 79--93. Springer, 2020{\natexlab{a}}.

\bibitem[Virgolin et~al.(2020{\natexlab{b}})Virgolin, Wang, Alderliesten, and
  Bosman]{virgolin2020machine}
M.~Virgolin, Z.~Wang, T.~Alderliesten, and P.~A. Bosman.
\newblock Machine learning for the prediction of pseudorealistic pediatric
  abdominal phantoms for radiation dose reconstruction.
\newblock \emph{Journal of Medical Imaging}, 7\penalty0 (4):\penalty0 046501,
  2020{\natexlab{b}}.

\bibitem[Virgolin et~al.(2021{\natexlab{a}})Virgolin, Alderliesten, Witteveen,
  and Bosman]{virgolin2021improving}
M.~Virgolin, T.~Alderliesten, C.~Witteveen, and P.~A.~N. Bosman.
\newblock Improving model-based genetic programming for symbolic regression of
  small expressions.
\newblock \emph{Evolutionary Computation}, 29\penalty0 (2):\penalty0 211--237,
  2021{\natexlab{a}}.

\bibitem[Virgolin et~al.(2021{\natexlab{b}})Virgolin, De~Lorenzo, Randone,
  Medvet, and Wahde]{virgolin2021model}
M.~Virgolin, A.~De~Lorenzo, F.~Randone, E.~Medvet, and M.~Wahde.
\newblock Model learning with personalized interpretability estimation
  ({ML-PIE}).
\newblock In \emph{Proceedings of the Genetic and Evolutionary Computation
  Conference Companion}, pages 1355--1364, 2021{\natexlab{b}}.

\bibitem[Worm and Chiu(2013)]{worm2013prioritized}
T.~Worm and K.~Chiu.
\newblock Prioritized grammar enumeration: {S}ymbolic regression by dynamic
  programming.
\newblock In \emph{Proceedings of the Genetic and Evolutionary Computation
  Conference}, pages 1021--1028, 2013.

\bibitem[Zelinka et~al.(2005)Zelinka, Oplatkova, and
  Nolle]{zelinka2005analytic}
I.~Zelinka, Z.~Oplatkova, and L.~Nolle.
\newblock Analytic programming--symbolic regression by means of arbitrary
  evolutionary algorithms.
\newblock \emph{International Journal of Simulation: Systems, Science and
  Technology}, 6\penalty0 (9):\penalty0 44--56, 2005.

\bibitem[Zou and Hastie(2005)]{zou2005regularization}
H.~Zou and T.~Hastie.
\newblock Regularization and variable selection via the elastic net.
\newblock \emph{Journal of the Royal Statistical Society: Series B (Statistical
  Methodology)}, 67\penalty0 (2):\penalty0 301--320, 2005.

\end{thebibliography}

\end{document}